\documentclass[lettersize,journal]{IEEEtran}
\usepackage{amsmath,amsfonts}
\usepackage{algorithmic}
\usepackage{algorithm}
\usepackage{array}
\usepackage[caption=false,font=normalsize,labelfont=sf,textfont=sf]{subfig}
\usepackage{textcomp}
\usepackage{stfloats}
\usepackage{url}
\usepackage{verbatim}
\usepackage{graphicx}
\usepackage{booktabs}
\usepackage{cite}
\usepackage{graphicx}

\usepackage{xspace}
\usepackage{bbold}
\usepackage{amsfonts}
\newcommand{\lb}[1]{\underline{#1}}
\newcommand{\ub}[1]{\overline{#1}}
\newcommand{\ten}[0]{\emph{$\boldsymbol{\tau}$}\xspace}
\DeclareMathOperator*{\argmin}{argmin}
\usepackage{tikz}
\usetikzlibrary{arrows.meta, shapes.geometric, positioning}

\newcommand{\vect}[1]{\boldsymbol{\mathbf{#1}}}
\newcommand{\mat}[1]{\boldsymbol{\mathbf{#1}}}

\usepackage{amsthm}
\theoremstyle{plain}
\newtheorem{theorem}{Theorem}
\newtheorem{lemma}{Lemma}

\newtheoremstyle{myremark}
  {\topsep}      % Space above
  {\topsep}      % Space below
  {\itshape}     % Body font
  {}             % Indent
  {\bfseries}    % Head font
  {.}            % Punctuation after head
  { }            % Space after head
  {}             % Head spec

\theoremstyle{myremark}

\newtheorem{remark}{Remark}

\newtheorem{assumption}{Assumption}

\begin{document}

\title{X-ACTA: eXtended Analytic Center Tension distribution Algorithm for fixed and mobile cable-driven-parallel-robot}

\author{
    Domenico~Dona',
    Vincenzo~Di~Paola,
    Alberto~Trevisani
    and~Matteo~Zoppi
    
    \thanks{Domenico Dona' is with the Faculty of Engineering, Free University of Bozen-Bolzano, Via Bruno Buozzi 1, Bozen 39100, Italy (e-mail: domenico.dona@unibz.it).}
    \thanks{Vincenzo Di Paola and Matteo Zoppi are with Department of Mechanical, Energy, Management and Transportation Engineering (DIME), University of Genova, Via alla Opera Pia 15, Genova 16143, Italy (e-mail: vincenzo.dipaola@edu.unige.it; matteo.zoppi@unige.it).}
    \thanks{Alberto Trevisani is with the Department of Management and Engineering (DTG), University of Padua, Stradella San Nicola 3, Vicenza 36100, Italy (e-mail: alberto.trevisani@unipd.it).}
    \thanks{Corresponding author: Domenico Dona'}
}

\maketitle

\begin{abstract}
Steering Cable-Driven Parallel Robots (CDPRs) beyond their Wrench-Feasible Workspace (WFW) augments their capabilities in challenging scenarios such as during aggressive maneuvers or following a cable failure. In this context, although the determination of cable tensions is a well-studied topic, only a few approaches address these scenarios.
Therefore, this paper introduces an extended version of the Analytic Center method as a criterion for selecting cable tensions outside the WFW while maintaining differentiability and including non-linear constraints. Notably, the proposed method maintains continuous and differentiable tension profiles, ensures fast real-time convergence to a unique solution, and, in contrast to other slack-based formulations, relegates wrench errors to a negligible area of the WFW. Its superiority in terms of smoothness and wrench error is confirmed via Pareto dominance with respect to the leading state-of-the-art method. Lastly, the effectiveness of the method is demonstrated through numerical experiments.
\end{abstract}

\begin{IEEEkeywords}
Parallel, Cable Robots, Wrench Feasible Workspace, Tension Distribution Algorithm, Analytic Center, Continuous, Differentiable, Non-linear Constraints.
\end{IEEEkeywords}

\date{}

\section{Introduction}
Cable-Driven Parallel Robots (CDPRs) are typically composed of a moving platform, namely a rigid body, actuated by cables exerting forces on it. The selection of cable tensions in CDPRs has been a core research topic since their birth~\cite{pott2018cable}.

In particular, overconstrained CDPRs, where the number of cables exceeds the payload's Degrees of Freedom ($\mathrm{DoF}$s), require the selection of a tension set among infinitely many feasible ones. This choice is typically made by solving an optimization problem known as the Tension Distribution Algorithm (TDA). The existing approaches differ in the choice of optimization objective, constraints, and solution method, typically tailored to the task requirements \cite{gouttefarde2015versatile}.

Generally, $p$-norm-based metrics have been widely adopted as objectives for TDAs \cite{Gosselin2011}. The 1-norm, for example, is key to several studies \cite{Shiang2000,Oh2003,Borgstrom2009}. Such problems are typically solved using Linear Programming (LP) approaches due to fast convergence; however, the jump between the vertices of the feasible polyhedron, resulting from the $1-$norm minimization, often results in discontinuous tension profiles. Similarly, discontinuities can occur when using the $\infty$-norm \cite{Gosselin2011}. Because continuity is a must to ensure precision while executing tasks, the criteria have been reformulated employing $p$-norms where $1<p<\infty$ \cite{Verhoeven2002}. 

\begin{figure}[t]
    \centering
    \includegraphics[width=\linewidth]{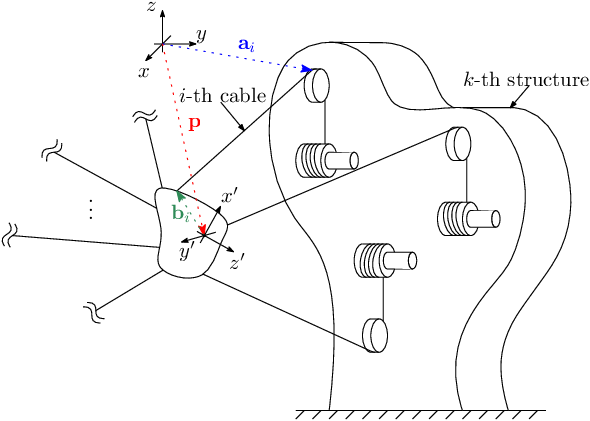}
    \caption{Scheme of a generic cable-driven parallel robot.}
    \label{fig:cdprscheme}
\end{figure}

Consequently, a widely adopted objective is the 2-norm of the tension vector \cite{Agahi2009,Pott2009,Taghirad2011} or its distance with respect to a reference tension, usually taken as the mean between the minimum and maximum ones \cite{Gosselin2011,Pott2013,Cote2016}. The popularity of these methods relies on the possibility of efficiently solving the associated optimization problem, using either closed-form solutions \cite{Pott2009,Pott2013} or specialized quadratic programming (QP) solvers \cite{Cote2016}. The former approaches are traditionally valued for their computational efficiency, albeit at the expense of limited workspace coverage. However, this advantage has become less significant as modern iterative methods have achieved real-time performance while maintaining full workspace coverage \cite{Cote2016,Ueland2020,DiPaola2025}.

Beyond $p-$norm-based methods, other relevant approaches include the Dijkstra alternating-projection algorithm proposed in \cite{Hassan2008,Hassan2011}. Additionally, in~\cite{Mikelsons2008}, the barycenter of the feasible tension polyhedron is selected as the optimal solution, thereby guaranteeing continuity.

However, the aforementioned methods are unable to produce differentiable tension profiles, and some are further constrained by the limited number of cables that can be employed. Remarkably, the Analytic Center (AC) method~\cite{DiPaola2024} and the method proposed in \cite{Ueland2020} (hereafter referred to as the NTNU method) exhibit this feature by incorporating tension bound constraints into the objective function via barrier functions, e.g., logarithmic functions. 

Interestingly, the latter also operates beyond the Wrench-Feasible Workspace (WFW) by introducing slack variables, at the cost of inducing a wrench error even within the WFW. Although this capability is shared with other existing methods \cite{muller2014analysis, Boumann2020, Boumann2020_2}, these rely on heuristic approaches, thus avoiding wrench errors inside the WFW at the cost of suboptimality beyond it. Moreover, none of them provides differentiable tension profiles when transitioning across WFW boundaries, despite this being crucial for haptic tasks~\cite{Cote2016} or cable breakage management~\cite{DiPaola2024_2}.

To the best of the authors' knowledge, no method in the literature simultaneously (i) ensures differentiable tension profiles, (ii) operates beyond the WFW avoiding wrench errors inside it, and lastly, (iii) allows for the inclusion of non-linear constraints.

In this paper, a TDA that meets all the aforementioned requirements is proposed. The method relies on a two-stage formulation: first, a relaxed version of the AC method is solved, and secondly, when the solution falls outside the (relaxed) tension bounds, a subsequent slacked optimization problem is solved. Notice that the switching does not affect the TDA solutions. Indeed, the tension profiles are proven to remain differentiable while switching, even in the presence of non-linear constraints.

The proposed TDA is validated through simulated experiments against the NTNU method \cite{Ueland2020}, which represents the existing state-of-the-art for ensuring differentiable tension profiles while operating beyond the WFW. The results demonstrate reduced convergence times and Pareto dominance regarding the trade-off between high-frequency content and wrench tracking error.

The remainder of this paper is organized as follows. Section~\ref{sec:modelling} details the kinematic and dynamic modeling equations for CDPRs. Subsequently, a motivating example comparing state-of-the-art approaches is discussed in Section~\ref{sec:motexample}, while the proposed method is presented in Section~\ref{sec:method}. Section~\ref{sec:simul} presents simulation results that demonstrate the effectiveness of the proposed method, while Section~\ref{sec:conclusion} closes the paper and highlights future works.

\section{CDPR Modelling}\label{sec:modelling}
When working with CDPRs, the platform dynamics is typically described as~\cite{pott2018cable}
\begin{equation}\label{eq:dyn}
    \mat{M}(\vect{q}) \ddot{\vect{q}} + \mat{C}(\vect{q}, \dot{\vect{q}}) \dot{\vect{q}} + \vect{g}(\vect{q}) = \mat{W}(\vect{q}) \ten + \vect{w}_{\mathrm{ext}},
\end{equation}
where $\vect{q} = [\vect{p}^\top, \; \vect{\psi}^\top]^\top$ denotes the generalized coordinate vector, which includes both the position $\vect{p}$ and the orientation $\vect{\psi}$ of the end-effector. The matrices $\mat{M}$ and $\mat{C}$ represent the mass and Coriolis terms, respectively; $\vect{g}$ accounts for the gravitational effects, while $\vect{w}_\mathrm{ext}$ stands for the external wrench. The vector $\ten \in \mathbb{R}^m$ collects the tensions of the $m$ cables, and $\mat{W}$ is the so-called wrench matrix, defined as
\begin{equation}
\mat{W} = 
\begin{bmatrix}
\vect{u}_1 & \dots & \vect{u}_m \\
\vect{b}_1 \times \vect{u}_1 & \dots & \vect{b}_m \times \vect{u}_m
\end{bmatrix},
\end{equation}
where $\vect{u}_i \in \mathbb{R}^d$ is the unit vector along the $i$-th cable and $\vect{b}_i \in \mathbb{R}^d$ is the corresponding attachment point on the platform\footnote{$d$ is the dimension of the physical space of the robot, e.g. $d=2$ for a planar robot and $d=3$ for a spatial one. With abuse of notation, the cross product of 2D vectors is intended as the third component of the 3D product.}.  
The wrench required for the equilibrium of the platform is then 
\begin{equation}
    \vect{w}_e = \mat{M}(\vect{q}) \ddot{\vect{q}} + \mat{C}(\vect{q}, \dot{\vect{q}}) \dot{\vect{q}} + \vect{g}(\vect{q}) - \vect{w}_{\mathrm{ext}} \in \mathbb{R}^n,
\end{equation}
where $n$ is the number of $\mathrm{DoF}$ of the platform. Thus, at a fixed time, for a given kinematic triplet $\{\vect{q}, \dot{\vect{q}}, \ddot{\vect{q}}\}$, the inverse dynamics problem in \eqref{eq:dyn} reduces to
\begin{equation}\label{eq:dynequi}
    \Sigma = \{ \ten \mid \mat{W}\ten = \vect{w}_e \},
\end{equation}
where the dependence on $\{\vect{q}, \dot{\vect{q}}, \ddot{\vect{q}}\}$ is omitted for brevity.  
Clearly, if the Degree of Redundancy ($\mathrm{DoR} = m - n$) is strictly positive, $\Sigma$ forms an affine subspace.

Furthermore, to ensure cable tautness while avoiding breakage, the tensions $\ten$ must be bounded above and below. Therefore, an $m$-dimensional convex hypercube $\Pi$ defining the domain of \emph{feasible} tensions is introduced as
\begin{equation}
\Pi = \big\{ \ten \mid \vect{0} \prec \lb\ten \preceq \ten \preceq \ub\ten \big\},
\label{tenslim}
\end{equation}
where $\lb\ten, \ub\ten \in \mathbb{R}^{m,+}$ denote the lower and upper tension limits, respectively. Without loss of generality, these limits are assumed to be identical across all cables. Consequently, the set of feasible solutions $\Gamma$ satisfying both Eq.~\eqref{eq:dyn} and Eq.~\eqref{tenslim} is given by
\begin{equation}
\Gamma = \Sigma \cap \Pi.
\end{equation}
In what follows, the condition $\Gamma \neq \emptyset$ represents the nominal condition for a TDA to operate, and the focus of this work is to extend the TDA's capability to generate approximate solutions when $\Gamma = \emptyset$.

\begin{remark}
 Notice that an approximate solution implies that the desired wrench can only be partially generated within the constraints of $\Pi$. Thus, the resulting discrepancy is here defined  as the wrench error or constraints violation introduced by the TDA.
\end{remark}

\section{Motivating Example}\label{sec:motexample}
\subsection{State-of-the-art methods}
This section reviews existing state-of-the-art methods for computing tension solutions even outside the WFW. We do not consider LP-based methods as they do not ensure continuity of the solution \cite{Gosselin2011}.

\noindent

\begin{figure*}[t]
\centering
\begin{minipage}[t]{0.31\textwidth}
    \centering
    \includegraphics[width=\linewidth]{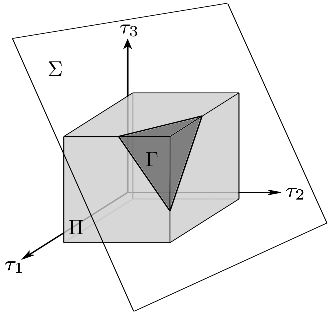}
    \vspace{-2mm} % Opzionale: stringe lo spazio prima della lettera
    \centerline{(a)}
    \label{feas}
\end{minipage}\hfill
\begin{minipage}[t]{0.31\textwidth}
    \centering
    \includegraphics[width=\linewidth]{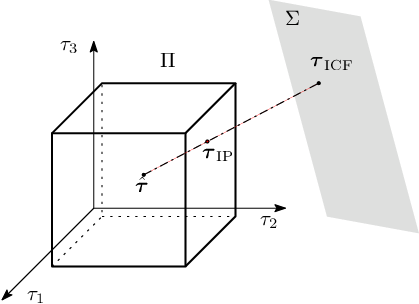}
    \vspace{-2mm}
    \centerline{(b)}
    \label{puncture}
\end{minipage}\hfill
\begin{minipage}[t]{0.31\textwidth}
    \centering
    \includegraphics[width=\linewidth]{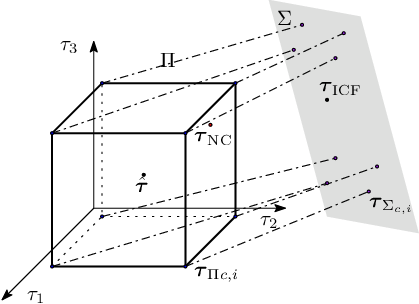}
    \vspace{-2mm}
    \centerline{(c)}
    \label{discsol}
\end{minipage}
\caption{(a) Geometric representation of the feasible solution space defined by $\Pi, \Sigma$, and $\Gamma$ ($m=3, n=2$). (b) Conceptual illustration of the Improved Puncture (IP) method. (c) Conceptual illustration of the Nearest Corner (NC) method.}
\label{fig:cdpr-3in1}
\end{figure*}

Hence, one class of the considered methods relies on the Improved Closed Form method (ICF) \cite{Pott2013}. Here, the tension components lying outside $\Pi$  are set to the nearest tension bound. However, the ICF can manage a maximum of DoR components exceeding the tension limits before failing. 
Two stratagems have been proposed to overcome these limitations. The first is the so-called Improved Puncture (IP) method~\cite{muller2014analysis}, while the other is the Nearest Corner (NC) method~\cite{Boumann2020}.

\indent
The former constructs a ray between the ICF solution and the center of $\Pi$, denoted as $\hat{\ten}$. The algorithm then selects the intersection point between this ray and the boundary of $\Pi$, as depicted in Fig.~\ref{puncture}. Although this method ensures rapid computation, the resulting tension profiles are non-differentiable, and the solution lacks guaranteed optimality regarding minimal wrench error. 

\indent
On the other hand, the NC method is based on geometric considerations. The algorithm projects the vertices of the hypercube $\Pi$ onto the affine space $\Sigma$. Then, it computes a weighted sum of these projections—where the weights are inversely proportional to the projection distances—to determine the final tension vector (see Fig.~\ref{discsol}). Mathematically, the method computes
\begin{equation}
d_i = || \ten_{\Pi_{c,i}} - \ten_{\Sigma_{c,i}} ||_2, \quad g_i = \left( \frac{L}{d_i} \right)^p,
\end{equation}
where $d_i$ is the Euclidean distance between the $i$-th vertex of the tension box, $\ten_{\Pi_{c,i}}$, and its projection onto the affine space, $\ten_{\Sigma_{c,i}}$. The weighting factors $g_i$ are calculated as the $p$-th power of the ratio between the sum of all distances $L = \sum^{2m}_{i=1} d_i$ and the individual distance $d_i$. Therefore, the tension vector is given by
\begin{equation}
    \ten_{\mathrm{NC}} = \sum^{2m}_{i=1} \left( \ten_{\Sigma_{c,i}} \frac{g_i}{G}\right),
\end{equation}
where $G=\sum^{2m}_1 g_i$. Note that if $\Sigma$ intersects $\Pi$ exactly at a vertex $\ten_{\Pi_{c,i}}$, the corresponding distance $d_i$ becomes zero. This causes $g_i \to \infty$, rendering the mathematical formulation singular and resulting in an unbounded tension calculation. Furthermore, the resulting tension profiles can exhibit discontinuities when $\Sigma$ is parallel to the faces of $\Pi$, as discussed in \cite{DiPaola2024_2}. 

Another relevant approach, here denoted as the Laval method~\cite{Cote2016}, introduces a slack variable to relax the equality constraint in Eq.~\eqref{eq:dynequi}, thereby extending it to work when $\Gamma = \emptyset$. The QP formulation proposed there reads as 
\begin{equation}
\begin{aligned}
  \ten_{\mathrm{Laval}} = \argmin \quad & 
        \vect{s}_1^T \mat{D}_1 \vect{s}_1 
        + (\ten - \hat{\ten})^T \mat{D}_2 (\ten - \hat{\ten}) \\
    \text{s.t.} \quad & 
        \mat{W}\ten + \vect{s}_1 = \vect{w}_e, \\
    & \ten \in \Pi,
\end{aligned}
\end{equation}
\noindent
where $\vect{s}_1$ is the slack variable, and $\mat{D}_1, \mat{D}_2$ are weighting matrices (typically diagonal). Note that the elements of $\mat{D}_1$ must be heavily penalized relative to $\mat{D}_2$ to minimize wrench violations inside the WFW. However, this method inherently introduces (small) wrench errors even under nominal conditions $\Gamma \neq \emptyset$ and yields non-differentiable tension profiles.

\indent
On the contrary, the NTNU method \cite{Ueland2020} in his slacked formulation ensures differentiable tension profiles even beyond the WFW. Specifically, this is obtained by combining a barrier-function with a slack formulation by splitting the objective function into two terms
\begin{equation}
    \begin{aligned}
        \ten_{\mathrm{NTNU}} = \argmin \quad & 
             g(\ten) + g_s(\vect{s}) \\
         \textrm{s.t.} \quad &  \mat{W} \ten = \vect{w}_e + \vect{s},
    \end{aligned}
\end{equation}
with
\begin{subequations}
\begin{align}&
\begin{aligned}
    g(\ten) = &c_0 \lVert \ten - \hat{\ten} \rVert_p^p + \\
    &- \sum_{i=1}^m \bigl(c_{1,i} \log(\ten_i - \lb \ten) 
    + c_{2,i} \log(\ub \ten - \ten_i )\bigr)
\end{aligned} \\
&g_s(\vect{s}) =
\sum_{j=1}^d \left( b_j \sqrt{s_j^2 + \epsilon_j} + s_j^2 \right)
\end{align}
\end{subequations}
where $c_0 = 2^p/(\ub \tau - \lb \tau)^p$, $c_{1,i}$, and $c_{2,i}$ are weighting parameters, and $b_j$ and $\epsilon_j$ are tuning parameters governing the error and smoothness of the solution. Practically, $g(\ten)$ dominates the optimization under nominal conditions, while $g_s(\vect{s})$ takes weight when $\Gamma = \emptyset$ has non-zero influence even when $\Gamma \neq \emptyset$. Furthermore, $g_s(\mathbf{s})$ is intentionally constructed as an approximate 1-norm, ensuring differentiable tension profiles. 

\subsection{Numerical example}

\begin{figure*}[t!]
\centering
\includegraphics[width=\linewidth]{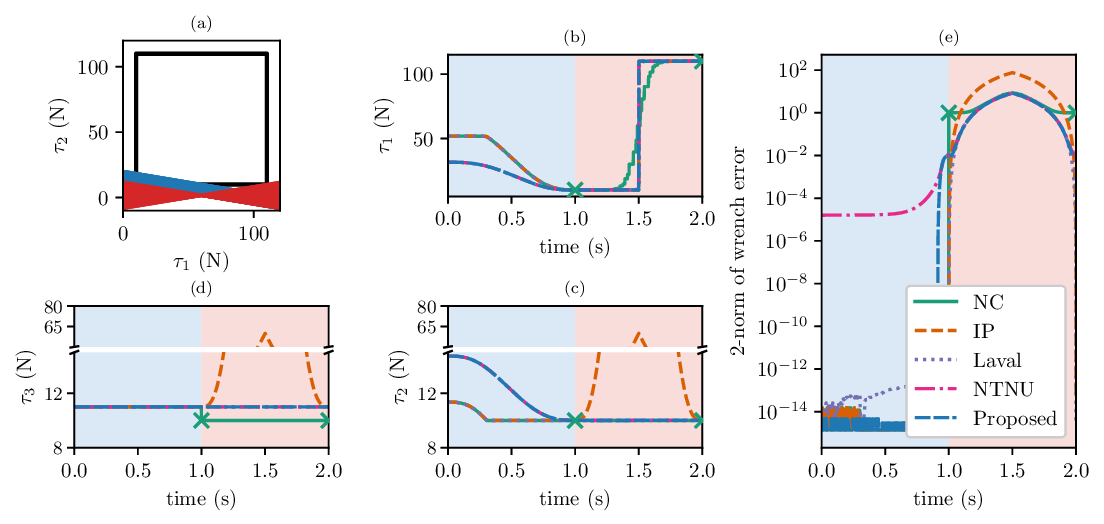}
\caption{A comparison of the Nearest Corner, Puncture, Laval and NTNU methods. The cross marks indicate where the NC method fails.}
\label{fig:comparisonexample}
\end{figure*}

In this section, a specific benchmark trajectory is considered to illustrate the strengths and weaknesses of the aforementioned methods, with a preview of the performance of the proposed TDA.

A simple $2-\mathrm{DoF}$, $3$-cable robot with tension limits of $[10, 110]$~N serves as the testbed.
The test trajectory comprises two phases. In the first phase, shown in blue in Fig.~\ref{fig:comparisonexample}a, the wrench matrix 
\begin{equation}
    \mat{W}_0=\begin{bmatrix}
        -1/6  & 1 & 0 \\
        0 & 0 & 1
    \end{bmatrix},
\end{equation}
is kept constant while the required wrench is increased until the upper-left boundary of the feasible region is reached. Specifically, the required wrench is defined as $\vect{w}_e =\vect{w}_{e,0} + \vect{w}(t)$, where $\vect{w}_{e,0} = [110, 11]^\top$~N, and $\vect{w}(t) = [w(t), 0]^\top$, with $w(t)$ being a quintic polynomial spanning from $0$ to $25/3$~N and having zero velocity and acceleration at the boundaries. The second phase (shown in red in Fig.~\ref{fig:comparisonexample}a) rotates the affine space $\Sigma$ around the midpoint of the first cable's tension range, $\ten_1 = (10 +110)/2$, until it reaches the opposite slope. This has been built on purpose, so that the entire first phase consists of feasible points (thus $\Gamma \neq \emptyset$), while for the second phase, only the initial and final points are feasible. 

The resulting tension profiles for the evaluated methods are depicted in Fig.~\ref{fig:comparisonexample}b. As expected, only the NTNU ensures differentiable tension profiles. On the other hand, the Nearest Corner Method exhibits two critical flaws; one can be seen where tensions are discontinuous at $t=1.5$~s, while the second is revealed at the domain corners, where $d_i = 0$ and the solver fails due to the singularity. Both of them are indicated by cross symbols in Fig.~\ref{fig:comparisonexample}b. 

Another comparison metric is given by the wrench violation inside and outside the WFW. Figure~\ref{fig:comparisonexample}c reports that methods relying on a slack variable formulation (e.g., Laval and NTNU) inherently introduce a wrench error even when operating safely inside the WFW. Eventually, although the NTNU produces differentiable tensions, it produces non zero errors even in nominal conditions. 

\section{Proposed method}\label{sec:method}

Section~\ref{sec:motexample} highlighted that existing methods either rely on heuristic projections onto the WFW or employ slack-based formulations. The former lacks optimality guaranties, while the latter introduces non-zero error even within the WFW. Therefore, this section presents a method that overcomes both limitations while ensuring differentiable tension profiles and enabling the inclusion of nonlinear inequality constraints.\\

To formalize the problem and reach the intended aim, a practical yet fundamental assumption regarding the imposed trajectory and its consequences is formulated as follows.

\begin{assumption}\label{A:1}
    Given a $k_1$-differentiable trajectory, the maps $t \mapsto \vect{w}_e(t)$ and $t \mapsto \mat{W}(t)$ are assumed to be $k_1$ times differentiable, with $\mat{W}$ full rank.
\end{assumption}

Below we recall the Analytic Center (AC) method~\cite{DiPaola2024}
\begin{equation}\label{eq:AC} 
\begin{aligned}
    \ten_{\mathrm{AC}} = 
    \argmin_{\ten} \quad &
        f(\ten) =
        \sum_{i=1}^m - \lb c_i \log(\tau_i - \lb \tau_i) - \ub c_i \log(\ub \tau_i - \tau_i) \\
    \text{s.t.} \quad & \mat{W}\,\ten = \vect{w}_e .
\end{aligned}
\end{equation}
where $\tau_i$ is the $i-$th component of $\ten$, while $\lb c_i$ and $\ub c_i$ are weighting parameters that can either be fixed or tuned during motion \cite{dona2025stiffness}.

\begin{remark}
    Problem~\eqref{eq:AC} admits a solution if and only if $\Sigma~\cap~\Pi \neq \emptyset$; thus, no solution exists outside the WFW.
\end{remark}

Now, to extend the AC method, a two-step approach is proposed. First, we relax the AC formulation to allow violation of $\Pi$ and subsequently, we include the slack variable only when needed.

Let $\delta>0$ be small and define the \textit{relaxed} tension box
\begin{equation}
    \Pi_\delta = \big\{ \ten \mid \vect{0} \prec \lb\ten + \delta \vect{1} \preceq \ten \preceq \ub\ten - \delta \vect{1} \big\}.
\end{equation}

\begin{figure*}[t]
\centering
\begin{tikzpicture}[
    node distance=0.4cm and 0.7cm, % Aumentata distanza orizzontale di 2mm per slanciare
    every node/.style={font=\small}, 
    startstop/.style={ellipse, draw, minimum width=1.2cm, minimum height=0.6cm},
    process/.style={rectangle, draw, minimum width=1.8cm, minimum height=0.7cm, align=center},
    decision/.style={diamond, draw, aspect=2, inner sep=0pt, minimum width=1.8cm},
    arrow/.style={->, thick}
]

% Nodes
\node (start) [startstop] {start};
\node (state) [process, right=of start, minimum width=1.7cm] {$\{\mathbf{q}, \dot{\mathbf{q}}, \ddot{\mathbf{q}}\}(t)$};
\node (compute) [process, right=of state, minimum width=1.7cm] {compute \\ $\mathbf{W}, \mathbf{w}_e$};
\node (rac) [process, right=of compute, minimum width=1.7cm] {solve \\ RAC (16)};
\node (decision) [decision, right=of rac] {$\tau_{\mathrm{RAC}} \in \Pi_s$?};

% Branch "Yes" e "No" - offset verticale aumentato per evidenziare l'angolo a 90°
\node (eac_assign) [process, above right=0.4cm and 0.8cm of decision] {$\tau_{\mathrm{EAC}} = \tau_{\mathrm{RAC}}$};
\node (eac_solve) [process, below right=0.4cm and 0.8cm of decision] {solve \\ EAC (18)};

% Tau e End
\node (tau) [process, right=2.7cm of decision, minimum width=1.5cm] {$\tau_{\mathrm{EAC}}(t)$};
\node (end) [startstop, right=of tau] {end};

% Arrows
\draw [arrow] (start) -- (state);
\draw [arrow] (state) -- (compute);
\draw [arrow] (compute) -- (rac);
\draw [arrow] (rac) -- (decision);

% Decision Arrows - yes/no posizionati 
\draw [arrow] (decision.north) |- node[above, xshift=0.4cm] {yes} (eac_assign.west);
\draw [arrow] (decision.south) |- node[below, xshift=0.4cm] {no} (eac_solve.west);

% Merge Arrows
\draw [arrow] (eac_assign.east) -| (tau.north);
\draw [arrow] (eac_solve.east) -| (tau.south);
\draw [arrow] (tau) -- (end);

\end{tikzpicture}
\caption{Flow chart of the proposed method.}
\label{fig:scheme}
\end{figure*}

Then, a smooth relaxation of the logarithmic barrier is considered 
\begin{equation}\label{eq:relax}
    \beta_\delta(\Delta \tau) =
    \begin{cases}
        -\log(\Delta \tau), & \Delta \tau \ge \delta , \\
        -\log(\delta) + \displaystyle\sum_{j=1}^{k_2+1} \frac{(-1)^j}{j\,\delta^j}(\Delta \tau - \delta)^j ,
        & \Delta \tau < \delta .
    \end{cases}
\end{equation}
where $\delta$ specifies the distances of $\tau$ from the bounds and $k_2+1$ is the order of the Taylor expansion. 

By definition, $\beta_\delta(\Delta \tau)$ is $(k_2+1)$-times differentiable, and thus casting it into the AC objective returns the Relaxed Analytic Center (RAC) problem 
\begin{equation}\label{eq:def} 
\begin{aligned}
   \ten_{\mathrm{RAC}} =
    \argmin_{\ten} \quad &
        \tilde{f}(\ten) = 
        \sum_{i=1}^m \beta_\delta(\tau_i - \underline{\tau}_i) +
                         \beta_\delta(\overline{\tau}_i - \tau_i)\\
    \text{s.t.} \quad & \mat{W}\,\ten = \vect{w}_e .
\end{aligned}
\end{equation}

\begin{lemma}\label{lemma:1}
    The problem~\eqref{eq:def} admits a unique solution as the objective is strictly convex, and the constraints are affine. Moreover, the solution $\ten_{\mathrm{RAC}}(t)$ is $\min\{k_1, k_2\}$-times differentiable. 
\end{lemma}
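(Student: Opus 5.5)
The plan is to prove existence, strict convexity and uniqueness first, and then obtain regularity in $t$ from the implicit function theorem applied to the KKT system of~\eqref{eq:def}. Throughout, I would assume $k_2\ge 1$.

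\emph{Step 1: regularity of $\beta_\delta$.} For $\Delta\tau<\delta$, the second branch of~\eqref{eq:relax} is the Taylor polynomial of order $k_2+1$ of $-\log(\cdot)$ at $\delta$, since $\frac{d^j}{dx^j}(-\log x)\big|_{\delta} = (-1)^j (j-1)!/\delta^j$. Hence all derivatives up to order $k_2+1$ of the two branches agree at $\Delta\tau=\delta$. This gives $\beta_\delta\in C^{k_2+1}(\mathbb{R})$, and it is defined on all of $\mathbb{R}$, so $\tilde f$ is finite everywhere.

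\emph{Step 2: strict convexity and coercivity.} On $\Delta\tau\ge\delta$ we have $\beta_\delta''=1/\Delta\tau^2>0$. On $\Delta\tau<\delta$, set $y=\delta-\Delta\tau>0$. Differentiating twice and using $(-1)^j(-y)^{j-2}=y^{j-2}$ gives
\begin{equation*}
\beta_\delta''(\Delta\tau)=\sum_{j=2}^{k_2+1}\frac{(j-1)\,y^{j-2}}{\delta^j}>0 ,
\end{equation*}
where the $j=2$ term equals $1/\delta^2$. Hence $\beta_\delta$ is strictly convex. Consequently each summand $\phi_i(\tau_i)=\beta_\delta(\tau_i-\lb\tau_i)+\beta_\delta(\ub\tau_i-\tau_i)$ is strictly convex, and the Hessian of $\tilde f$ is diagonal with positive entries.

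For existence, the leading term of the polynomial branch is $y^{k_2+1}/((k_2+1)\delta^{k_2+1})\to+\infty$ as $y\to\infty$. When $\tau_i\to\pm\infty$, one of the two terms in $\phi_i$ therefore grows polynomially, while the other decreases only like $-\log|\tau_i|$. So $\phi_i$ is coercive, $\tilde f$ is coercive on the closed affine set $\Sigma$, and a minimizer exists. Because $\Sigma$ is affine, hence convex, and $\tilde f$ is strictly convex, the minimizer is unique.

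\emph{Step 3: differentiability in $t$.} The constraints are affine and $\mat W$ is full rank, so the KKT conditions are necessary and sufficient:
\begin{equation*}
F(\ten,\vect\lambda,t)=\begin{bmatrix}\nabla\tilde f(\ten)+\mat W(t)^\top\vect\lambda\\ \mat W(t)\ten-\vect w_e(t)\end{bmatrix}=\vect 0 .
\end{equation*}
By Step 1, $F$ is $C^{k_2}$ in $(\ten,\vect\lambda)$, and by Assumption~\ref{A:1} it is $C^{k_1}$ in $t$. Hence $F$ is $C^{\min\{k_1,k_2\}}$ jointly.

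The Jacobian of $F$ with respect to $(\ten,\vect\lambda)$ is the saddle matrix $\begin{bmatrix}\mat H & \mat W^\top\\ \mat W & \mat 0\end{bmatrix}$ with $\mat H=\nabla^2\tilde f\succ 0$. It is nonsingular: if $\mat H\vect x+\mat W^\top\vect y=\vect 0$ and $\mat W\vect x=\vect 0$, then $\vect x^\top\mat H\vect x=0$, so $\vect x=\vect 0$, and full row rank of $\mat W$ then gives $\vect y=\vect 0$.

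The implicit function theorem yields a local $C^{\min\{k_1,k_2\}}$ solution branch around each $t$. Uniqueness from Step 2 identifies this branch with $\ten_{\mathrm{RAC}}(t)$ for every $t$, so $\ten_{\mathrm{RAC}}(t)$ is $\min\{k_1,k_2\}$-times differentiable globally.

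The main obstacle is Step 1, specifically the bookkeeping of smoothness. The barrier is only $C^{k_2+1}$ at the junction $\Delta\tau=\delta$, because its $(k_2+2)$-th derivative jumps there. Since the implicit function theorem consumes one derivative (the gradient enters $F$), the order drops to $k_2$, and this is exactly why the statement gives $\min\{k_1,k_2\}$ rather than $k_2+1$. The positivity of the truncated-series second derivative in Step 2 also needs the sign check shown above.
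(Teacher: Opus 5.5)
Your proposal is correct and follows essentially the same route as the paper. The paper defers to Thm.~1 of the cited AC work and, as in its own main theorem, applies the implicit function theorem to the KKT system, whose saddle-point Jacobian is nonsingular because $\nabla^2\tilde f \succ 0$ and $\mathbf{W}$ has full row rank. Your additional steps fill in details the paper leaves implicit: the coercivity argument for existence, which is genuinely needed because $\tilde f$ is finite on all of $\mathbb{R}^m$ so strict convexity alone would not give a minimizer, and the checks that $\beta_\delta\in C^{k_2+1}$ with $\beta_\delta''>0$ for $k_2\ge 1$.
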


\begin{proof}
The proof follows the same steps as Thm.~1 in~\cite{DiPaola2024}.
\end{proof}

\begin{remark}
Essentially, the introduction of $\Pi_\delta \subset \Pi$ and $\ten_{\mathrm{RAC}}$ avoids the wrench errors within the WFW, while providing a measure of proximity to the boundaries, indicating when the slack variable must be included in the formulation. Additionally, since $\delta$ is small, the subset of tension solutions that lie outside the relaxed box $\in \Pi \setminus \Pi_\delta$ is negligible and has no consequence for practical purposes. 
\end{remark}

Now, the second step necessary to extend the AC consists of accepting the violation of the desired wrench $\vect{w}_e$ for any solution outside $\Pi_\delta$. This can be done through the introduction of slack variables.

Hence, one can consider a $k_3$-differentiable function $\mu = \mu (\ten_{\mathrm{RAC}})$ that is zero if $\ten_{\mathrm{RAC}} \in \Pi_\delta$ and $>0$ otherwise , for example 

\begin{equation}\label{eq:mu}
\mu(\ten_{\mathrm{RAC}}) =
\begin{cases}
0 & \text{if} \, \ten_{\mathrm{RAC}} \in \Pi_\delta, \\
\tanh \!\left(
\sum_{i=1}^m 
\left|
\begin{aligned}
&\tau_{\mathrm{RAC},i} \\
&-\, c_{\Pi_\delta}(\tau_{\mathrm{RAC},i})
\end{aligned}
\right|^{k_3+1}
\right)
& \text{otherwise}.
\end{cases}
\end{equation}
where $c_{\Pi_\delta}$ clips the tension components on the boundary of $ \Pi_\delta$. 

\begin{lemma}\label{lemma:2}
    The function $\mu(t)$ is $\min\{k_1, k_2, k_3\}$-times differentiable.
\end{lemma}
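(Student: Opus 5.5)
The argument will treat $\mu(t)$ as the composition $t \mapsto \ten_{\mathrm{RAC}}(t) \mapsto \mu(\ten_{\mathrm{RAC}}(t))$. Lemma~\ref{lemma:1} already gives the regularity of the inner map: $\ten_{\mathrm{RAC}}(t)$ is $\min\{k_1,k_2\}$-times differentiable. It then suffices to show that the outer map $\ten \mapsto \mu(\ten)$ in \eqref{eq:mu} is $k_3$-times differentiable on all of $\mathbb{R}^m$. The chain rule (Faà di Bruno's formula) then makes the composition $\min\{k_1,k_2,k_3\}$-times differentiable.

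The core step is the regularity of $\mu$. I will write it as $\mu(\ten)=\tanh\big(h(\ten)\big)$ with
\begin{equation*}
h(\ten)=\sum_{i=1}^m \phi(\tau_i), \qquad \phi(\tau)=\big|\tau - c_{\Pi_\delta}(\tau)\big|^{k_3+1}.
\end{equation*}
This single expression covers both branches of \eqref{eq:mu}, because $h\equiv 0$ exactly on $\Pi_\delta$ and $\tanh(0)=0$. Since $\tanh$ is $C^\infty$, it is enough to study the scalar function $\phi$. Write $a=\lb\tau+\delta$ and $b=\ub\tau-\delta$ for the clipping bounds. Then $\phi(\tau)=(a-\tau)^{k_3+1}$ for $\tau<a$, $\phi(\tau)=0$ on $[a,b]$, and $\phi(\tau)=(\tau-b)^{k_3+1}$ for $\tau>b$. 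Each piece is a polynomial, so $\phi$ is smooth away from $a$ and $b$.

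At the junction point $a$ (and symmetrically at $b$), the $j$-th derivative of the left piece is $\frac{(k_3+1)!}{(k_3+1-j)!}(-1)^j(a-\tau)^{k_3+1-j}$. For every $j\le k_3$ this tends to $0$ as $\tau\to a^-$, matching the identically zero middle piece. By induction on $j$, using the mean value theorem to pass from continuity of the one-sided derivatives to existence of the derivative at the point, $\phi$ is therefore $k_3$-times differentiable, indeed $C^{k_3}$. The $(k_3+1)$-th derivative jumps, so $k_3$ is sharp. Summing over $i$ and composing with $\tanh$ shows that $\mu$ is $C^{k_3}$ in $\ten$.

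I expect the main obstacle to be justifying the chain rule at instants when $\ten_{\mathrm{RAC}}(t)$ crosses $\partial\Pi_\delta$. There, \eqref{eq:mu} switches branch, and a naive piecewise argument in $t$ could miss derivative mismatches. The unified formula $\mu=\tanh\circ h$ is what removes the issue: $h$ is globally $C^{k_3}$, so no case distinction in $t$ is needed. A minor point I will make explicit is that Faà di Bruno needs both maps to be $C^r$ with $r=\min\{k_1,k_2,k_3\}$, not merely $r$-times differentiable. I will state the regularity in Lemma~\ref{lemma:1} as $C^{\min\{k_1,k_2\}}$; that lemma follows from the implicit function theorem, so it gives continuous derivatives.
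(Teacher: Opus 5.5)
Your proposal is correct and follows the same route as the paper: you treat $\mu(t)$ as the composition of $t\mapsto\boldsymbol{\tau}_{\mathrm{RAC}}(t)$, which is $C^{\min\{k_1,k_2\}}$ by Lemma~\ref{lemma:1}, with the $C^{k_3}$ map $\boldsymbol{\tau}\mapsto\mu(\boldsymbol{\tau})$, and conclude with the Fa\`a di Bruno formula. The only difference is that you explicitly verify the $C^{k_3}$ regularity of $\mu$ across $\partial\Pi_\delta$, via the unified form $\tanh\circ h$ and the derivative matching of $|\tau-c_{\Pi_\delta}(\tau)|^{k_3+1}$ at the clipping bounds, whereas the paper takes this as given by construction.
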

\begin{proof}
    Note that $\mu$ is the composition of the mappings defined in~\eqref{eq:def} and~\eqref{eq:mu}. By Lemma~\ref{lemma:1}, these constituents possess regularity $C^{k_3}$ and $C^{\min\{k_1, k_2\}}$, respectively. The assertion then follows immediately from the Faà di Bruno formula.
\end{proof}

Finally, with this notion, one can define the Extended Analytic Center (EAC) as
\begin{equation}\label{eq:def2} 
\begin{aligned}
   \ten_{\mathrm{EAC}} =
    \argmin_{\ten, \vect{s}} \quad &
        f(\ten) + \eta \lVert \vect{s} \rVert_2^2 + \gamma \lVert \mu \vect{s} \rVert_2^2  \\
    \text{s.t.} \quad &
        \mat{W}\,\ten + \mu \vect{s} = \vect{w}_e ,
\end{aligned}
\end{equation}
where $\eta, \, \gamma>0$ are weighting coefficients and $\vect{s}\in\mathbb{R}^n$ are the slack variables.

To summarize the link between the problems defined so far, a schematic of the proposed algorithm is presented in Fig.~\ref{fig:scheme}. 

Notice that the computational effort required to solve problem~\eqref{eq:def} is not lost: when transitioning from a feasible to an unfeasible region along a trajectory, the solution $\ten_{\mathrm{RAC}}$ (with $\vect{s} = \vect{0}$) provides an effective warm start for problem~\eqref{eq:def2}.

Now, the main results of this section can be stated.

\begin{theorem}
Given \ref{A:1}, the problem~\eqref{eq:def2} admits a unique solution as the objective is strictly convex and the constraints are affine. Moreover, the solution $\ten_{\mathrm{EAC}} (t)$ is $\min\{k_1, k_2, k_3\}$-times differentiable.
\end{theorem}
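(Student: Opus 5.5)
The plan is to treat problem~\eqref{eq:def2} as a parametric equality-constrained convex program and apply the implicit function theorem to its KKT system, as in Lemma~\ref{lemma:1}. Two regimes need separate handling.

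\textbf{Regime $\mu=0$.} Here $\ten_{\mathrm{RAC}}\in\Pi_\delta$, the slack drops out of the constraint, and the objective separates as $f(\ten)+\eta\lVert\vect{s}\rVert_2^2$. It is therefore minimized by $\vect{s}=\vect{0}$ together with the AC solution. Since $\ten_{\mathrm{RAC}}\in\Pi_\delta$, the relaxed barrier coincides with the true logarithm, because each argument is at least $\delta$. Hence the AC solution equals $\ten_{\mathrm{RAC}}$ (for unit weights), which is consistent with the flow chart.

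\textbf{Regime $\mu>0$.} Here I would eliminate $\vect{s}=(\vect{w}_e-\mat{W}\ten)/\mu$. The reduced objective is $f(\ten)+(\eta/\mu^2+\gamma)\lVert\vect{w}_e-\mat{W}\ten\rVert_2^2$ on the open box $\mathrm{int}\,\Pi$. It is strictly convex because the barrier $f$ has a positive definite diagonal Hessian there. It also tends to $+\infty$ at the boundary of $\Pi$, so a minimizer exists in the interior and is unique. For the joint variables $(\ten,\vect{s})$, the Hessian $\mathrm{diag}(\nabla^2 f,\,2(\eta+\gamma\mu^2)\mat{I})$ is positive definite, and the constraint is affine with surjective Jacobian $[\mat{W}\;\,\mu\mat{I}]$. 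Strict convexity plus affine constraints then gives uniqueness directly. One caveat is that $\eta>0$ keeps $\vect{s}$ strictly convex even as $\mu\to0$, so uniqueness holds uniformly in $\mu$.

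\textbf{Differentiability.} I would write the KKT system
\[
F(\ten,\vect{s},\vect{\lambda};t)=\begin{bmatrix}\nabla f(\ten)+\mat{W}^\top\vect{\lambda}\\ 2(\eta+\gamma\mu^2)\vect{s}+\mu\vect{\lambda}\\ \mat{W}\ten+\mu\vect{s}-\vect{w}_e\end{bmatrix}=\vect{0}.
\]
Its Jacobian in $(\ten,\vect{s},\vect{\lambda})$ is a saddle-point matrix whose $(1,1)$ block is positive definite and whose constraint block $[\mat{W}\;\,\mu\mat{I}]$ has full row rank, because $\mat{W}$ has full rank by Assumption~\ref{A:1}. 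Such a matrix is nonsingular for every value of $\mu\ge0$, including $\mu=0$. I emphasize that $\mu$ enters $F$ as a parameter, so a single KKT map covers both regimes, including the switching instant. The parameter dependence of $F$ comes only through $\mat{W}(t)$, $\vect{w}_e(t)$ and $\mu(t)$. These are $C^{k_1}$, $C^{k_1}$ and $C^{\min\{k_1,k_2,k_3\}}$ by Assumption~\ref{A:1} and Lemma~\ref{lemma:2}, while $f$ is smooth on $\mathrm{int}\,\Pi$. The implicit function theorem then yields a local solution branch of class $C^{\min\{k_1,k_2,k_3\}}$. Uniqueness of the minimizer identifies this branch with $\ten_{\mathrm{EAC}}(t)$ globally in $t$.

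\textbf{Main obstacle.} The hardest step is the switching time, where $\mu$ passes from $0$ to positive values. At such a point the problem changes structure: the slack stops affecting the constraint, and the elimination of $\vect{s}$ above becomes singular. I would avoid the elimination argument entirely and rely on nonsingularity of the KKT Jacobian at $\mu=0$. At $\mu=0$ the Jacobian is block-triangular-like: $\vect{s}$ decouples with coefficient $2\eta>0$, and the remaining block is the AC KKT matrix, which is nonsingular since $\mat{W}$ has full rank and $\nabla^2 f\succ0$. This is precisely why $\eta>0$ is needed.
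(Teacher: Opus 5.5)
Your proposal is correct and follows essentially the paper's route. You write the KKT system of \eqref{eq:def2} and show its Jacobian is nonsingular both for $\mu>0$ and for $\mu=0$, where it splits into the $2\eta$ identity block and the AC KKT matrix; you then invoke the implicit function theorem using the regularity of $\mat{W}$, $\vect{w}_e$ and $\mu$ from Assumption~\ref{A:1} and Lemma~\ref{lemma:2}. Your single saddle-point argument covering all $\mu\ge 0$, your existence argument via coercivity of the barrier on the bounded box, and your identification of the local IFT branch with the unique KKT point are modest refinements: the paper treats the two cases separately and only argues uniqueness, not existence.
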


\begin{proof}
Consider the Lagrangian of the problem~\eqref{eq:def2}
\begin{equation}
    \mathcal{L}(\ten, \vect{s}, \vect{\lambda}) = f(\ten) + \eta \lVert \vect{s} \rVert_2^2+ \gamma \lVert  \mu \vect{s} \rVert_2^2 + \vect{\lambda}^\top (\mat{W}\,\ten + \mu \vect{s} - \vect{w}_e),
\end{equation}
then the Karush-Kuhn-Tucker (KKT) conditions~\cite{Boyd2004convex} that return the optimal point for this problem are
\begin{equation}\label{eq:KKTcond}
\vect{r}(t, \vect{y})=
    \begin{cases}
        \nabla_{\ten} \mathcal{L} = \nabla_{\ten} f + \mat{W}^\top \vect{\lambda} = \vect{0} \\
        \nabla_{\vect{s}} \mathcal{L} = 2( \gamma \mu^2 + \eta) \vect{s} + \mu \vect{\lambda} = \vect{0} \\
        \nabla_{\vect{\lambda}} \mathcal{L} = \mat{W}\,\ten + \mu \vect{s} - \vect{w}_e = \vect{0},
    \end{cases}
\end{equation}
where $\vect{y} = [\ten^\top, \vect{s}^\top, \vect{\lambda}^\top]^\top$ optimization variables and time-dependence is made through the parameters $\mat{W}(t)$, $\vect{w}_e(t)$, and $\mu(t)$. 
%The map $\vect{r}(t, \vect{y})$ is $\mathcal{C}^{\min \{k_1, k_2, k_3 \} }$ from \ref{lemma:2}. 
Now, take its unique solution $(t^\star, \vect{y}^\star)$ and assume that the Jacobian $\mat{J}_{\vect{r}} = \partial \vect{r} / \partial \vect{y} \rvert_{t^\star, \vect{y}^\star}$ is full rank. Under Assumption~\ref{A:1} and Lemma~\ref{lemma:2}, the mapping $\vect{r}(t, \vect{y})$ is $\min\{k_1, k_2, k_3\}$-times differentiable in $(t, \vect{y})$. Then, according to the Implicit Function Theorem (IFT), there exists a neighborhood $U$ of $t^\star$ and a unique function $\xi$, $\min\{k_1, k_2, k_3\}$-times differentiable, such that $\xi(t^\star)=\vect{y}^\star$ and $\vect{r}(t, \xi(t))=\vect{0}$ for all $t \in U$.

The proof conclude showing that $\mat{J}_{\vect{r}}$ is full-rank for both values $\mu > 0$ and $\mu = 0$. When $\mu > 0$, one has
\begin{equation}\label{eq:block1}
\mat{J}_{\vect{r}}=
    \begin{bmatrix}
        \nabla_{\ten}^2 f & \vect{0} & \mat{W}^\top \\
        \vect{0} & 2 (\mu^2 \gamma + \eta)  \mathbb{1}_n & \mu \mathbb{1}_n \\
        \mat{W} & \mu \mathbb{1}_n & \vect{0}
    \end{bmatrix} = \begin{bmatrix}
        \mat{A}' & \mat{B}' \\
        \mat{B}'^\top & \mat{0}
    \end{bmatrix},
\end{equation}
with $\mat{A}'$ invertible and positive definite, as $\nabla_{\ten}^2 f$ and $2 (\mu^2 \gamma + \eta) \mathbb{1}_n$ both are, while $\mat{B}'$ is full-rank by \ref{A:1}. The nonsingularity can be concluded by Schur complement analysis.

In the case where $\mu = 0$, considering the rearranged version of the KKT matrix
\begin{equation}\label{eq:block2}
    \mat{J}_{\vect{r}'} = \begin{bmatrix}
        2 \eta  \mathbb{1}_n  & \vect{0} & \vect{0} \\
        \vect{0} & \nabla_{\ten}^2 f & \mat{W}^\top \\
        \vect{0} & \mat{W} & \vect{0}
    \end{bmatrix} = \begin{bmatrix}
        \mat{A}'' & \vect{0} \\
        \vect{0} & \mat{B}''
    \end{bmatrix},
\end{equation}
is clear that is nonsingular as its diagonal contains $\mat{A}''$ nonsingular for $\eta>0$ while $\mat{B}''$ is the KKT matrix of the AC method.
\end{proof}

\begin{remark}
    The differentiability of the tension profiles for the EAC is then limited by the functions introduced to extend the original AC. Indeed, when $\mu=0$, the differentiability depends on the smoothness of the input trajectory only. Notice, as for the AC, the inclusion of non-linear constraints, provided that they are convex, would automatically introduce a $k_4$ condition to consider for the smoothness of the tension profiles.
\end{remark}

\section{Simulations}\label{sec:simul}
To demonstrate the performance of the proposed approach, numerical experiments are conducted and compared to the other state-of-the-art methods. Firstly, a Monte Carlo analysis on the tuning parameters is reported to show their influence (Section~\ref{ssec:montecarlo}) when using the TDAs. After that, the performance of the method for a feasible (Section~\ref{ssec:feasible}) and an unfeasible (Section~\ref{ssec:unfeasible}) trajectory is studied. Finally, the ability to include nonlinear constraints is shown in Section~\ref{ssec:nl}.

Unless otherwise stated, the robot and trajectory are the same as those in~\cite{Ueland2020}. The corresponding trajectory parameters are reported in Tab.~\ref{tab:trajectory_definition}, while the robot parameters are reported in Tab.~\ref{tab:cdpr_all}.

The code used to generate the results is available in the repository associated with this article. 
Timing results were obtained by executing C++ compiled code from Python on a laptop equipped with an AMD R9 8945HS CPU and 32~GB of RAM, running Ubuntu 24.04.

For both methods, the Newton method with unfeasible start and Armijo backtracking is employed, as described in \cite{Boyd2004convex}. In addition to standard backtracking, a safety check is added during the damped phase to ensure that the solutions remain within $\Pi$.

\begin{table}[t]
\caption{CDPR system parameters and geometric configuration, as in \cite{Ueland2020}.}
\label{tab:cdpr_all}
\centering

%--------------------%
% Scalar parameters  %
%--------------------%
\begin{tabular}{llll}
\toprule
Parameter & Symbol & Value & Unit \\
\midrule
Degrees of freedom & $\mathrm{DoF}$ & 6 & -- \\
Number of cables & $m$ & 8 & -- \\
Inertias & $I_{xx}=I_{yy}=I_{zz}$ & 0.1 & $\mathrm{kg\,m^2}$ \\
Payload mass & $m_L$ & 1.0 & $\mathrm{kg}$ \\
Maximum cable tension & $\ub\ten$ & 40.0 & $\mathrm{N}$ \\
Minimum cable tension & $\lb\ten$ & 5.0 & $\mathrm{N}$ \\
\bottomrule
\end{tabular}

\vspace{0.5em}

%--------------------%
% Geometric points   %
%--------------------%
\setlength{\tabcolsep}{5pt}
\begin{tabular}{c|ccc|ccc}
\toprule
Cable $i$ &
\multicolumn{3}{c|}{$\vect{a}_i=[x,\ y,\ z]^T$ [m]} &
\multicolumn{3}{c}{$\vect{b}'_i=[x,\ y,\ z]^T$ [m]} \\
\midrule
1 & $-0.415$ & $-0.315$ & $-0.500$ & $-0.0525$ & $-0.0760$ & $0.0$ \\
2 & $-0.415$ & $-0.315$ & $ \phantom{-}0.500$ & $-0.0525$ & $-0.0760$ & $0.0$ \\
3 & $ \phantom{-}0.415$ & $-0.315$ & $ \phantom{-}0.500$ & $ \phantom{-}0.0525$ & $-0.0760$ & $0.0$ \\
4 & $ \phantom{-}0.415$ & $-0.315$ & $-0.500$ & $ \phantom{-}0.0525$ & $-0.0760$ & $0.0$ \\
5 & $ \phantom{-}0.415$ & $ \phantom{-}0.315$ & $-0.500$ & $0.0$ & $ \phantom{-}0.1240$ & $0.0$ \\
6 & $-0.415$ & $ \phantom{-}0.315$ & $ \phantom{-}0.500$ & $0.0$ & $ \phantom{-}0.1240$ & $0.0$ \\
7 & $-0.415$ & $ \phantom{-}0.315$ & $-0.500$ & $0.0$ & $ \phantom{-}0.1240$ & $0.0$ \\
8 & $ \phantom{-}0.415$ & $ \phantom{-}0.315$ & $ \phantom{-}0.500$ & $0.0$ & $ \phantom{-}0.1240$ & $0.0$ \\
\bottomrule
\end{tabular}

\end{table}

\begin{table}[t]
\caption{Reference trajectory parameters. The angles $\phi$, $\theta$ and $\psi$ are rotation angles using Euler convention. The position of the end-effector is defined as $\vect{p}=[x,\ y,\ z]$ while its orientation with $\vect{\psi}=[\phi,\ \theta,\ \psi]$. Trajectory is the same as in \cite{Ueland2020}.}
\label{tab:trajectory_definition}
\centering
\setlength{\tabcolsep}{6pt}
\begin{tabular}{ll}
\toprule
Item & Definition \\
\midrule
Total time & $T=10$~s \\
Physical time & $t \in [0,T]$ \\
Scaled path parameter & $s \in [0,1]$, quintic polynomial law \\
\midrule
$x(s)$ & $0.1\sin(\pi s)$ \\
$y(s)$ & $0.1\cos(\pi s)$ \\
$z(s)$ & $0.34\sin(\pi s)\cos(2\pi s)-0.052$ \\
$\phi(s)$ & $-0.2\,s\cos(\pi s)$ \\
$\theta(s)$ & $0$ \\
$\psi(s)$ & $-0.17\sin(\pi s)\cos(2\pi s)$ \\
\bottomrule
\end{tabular}
\end{table}

% \begin{figure}[t!]
%     \centering
%     \includegraphics[width=\linewidth]{Figures/basetraj.eps}
%     \caption{Base trajectory used in the experiments. The angles $\phi$, $\theta$ and $\psi$ are rotation angles using Euler convention. The position of the end-effector is defined as $\vect{p}=[x,\ y,\ z]$ while its orientation with $\vect{\psi}=[\phi,\ \theta,\ \psi]$.}\label{fig:basetraj}
% \end{figure}

\subsection{Effect of tuning parameters}\label{ssec:montecarlo}
As anticipated, the EAC requires tuning $\delta$, $\gamma$, $\eta$ parameters and for them we suggest using $\delta=0.1$~N, $\gamma=5\cdot 10^2$, and $\eta=10^{-5}$. However, the sensitivity analysis is needed to prove this choice is possibly an optimal trade-off over a wide range of values. At the same time, we do compare the sensitivity of the EAC against the NTNU method. To reduce the search space, $c_{1,i}=c_{2,i}=0.1$ are fixed as suggested in~\cite{Ueland2020}, leaving only $\epsilon$ and $b$ to be tuned.

From a wrench-error standpoint, the two methods reach similar performances, with minimum values that are practically identical. In Fig.~\ref{fig:heatmapwrench}a-b, the maximum error during the trajectory is displayed for both methods while varying the tuning parameters. It is clear that both methods have a large domain in which the error is minimal, thus not imposing strict constraints on their values. 

\begin{figure*}[t!]
    \centering
    \includegraphics[width=\linewidth]{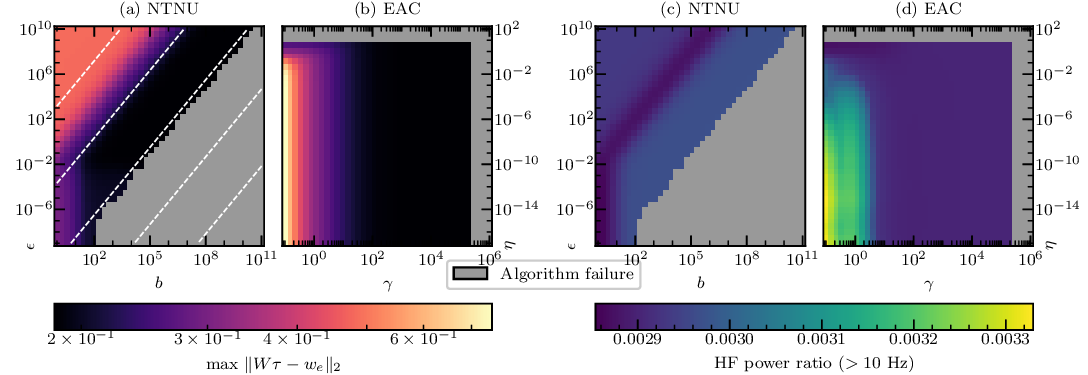}
    \caption{Max 2-norm error of converged solution for (a) NTNU method and (b) the proposed method. In the NTNU plot, white dashed lines represent the equation $\sqrt{\epsilon}/b=\mathrm{const}$. Conversely, (c) and (d) depict the fraction of frequency content above $10$~Hz for the NTNU and proposed methods, respectively. Not converged solutions are displayed in grey.}\label{fig:heatmapwrench}
\end{figure*}

Another metric to evaluate the TDAs performances, in addition to wrench error, is the smoothness of the tension profiles. Smoothness is necessary to avoid exciting high-frequency components of the cables, leading to possible vibrations of the load. For this reason, Fig.~\ref{fig:heatmapwrench}c-d reports the ratio of the power spectral density (PSD) above a certain threshold, here chosen as $f_t = 10$~Hz for both methods. 

By comparing Fig.~\ref{fig:heatmapwrench}a-b with Fig.~\ref{fig:heatmapwrench}c-d, it is clear that the region of minimal wrench error for the NTNU method corresponds to higher frequency content, while the EAC method shows lower frequency content in the area with minimal wrench error. This phenomenon is further confirmed by observing the Pareto front for the two methods in terms of error and smoothness, provided in Fig.~\ref{fig:paretofreq}.

\begin{figure}[t!]
    \centering
    \includegraphics[width=\linewidth]{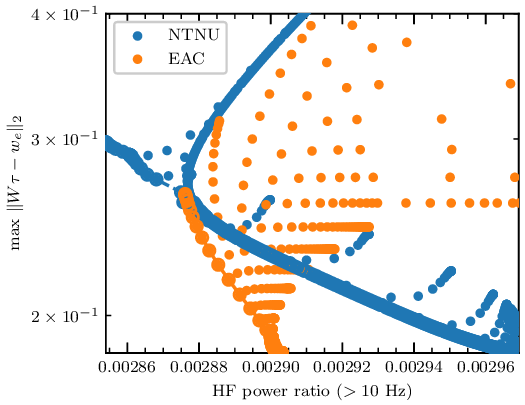}
    \caption{Pareto fronts for the two methods, PSD content above $10$~Hz vs $2-$ norm wrench error.}\label{fig:paretofreq}
\end{figure}

Lastly, the tuning parameters affect the convergence speed of both methods, as they affect the conditioning of the KKT matrix. This effect is studied in Fig.~\ref{fig:timing}a-b, where the maximum time to converge is reported for both methods while varying their tuning parameters; notably, the worst time to converge is comparable for the two methods, with the EAC slightly outperforming the NTNU one. Apart from the maximum time to converge, used for real-time operation, other control methods require computing the reference tension profiles in advance. In this setting, the mean time is typically used~\cite{liang2026}, and thus the convergence time is depicted in Fig.~\ref{fig:timing}c-d for both methods. It is evident that the EAC achieves faster convergence.

\begin{figure*}[t!]
    \centering
    \includegraphics[width=\linewidth]{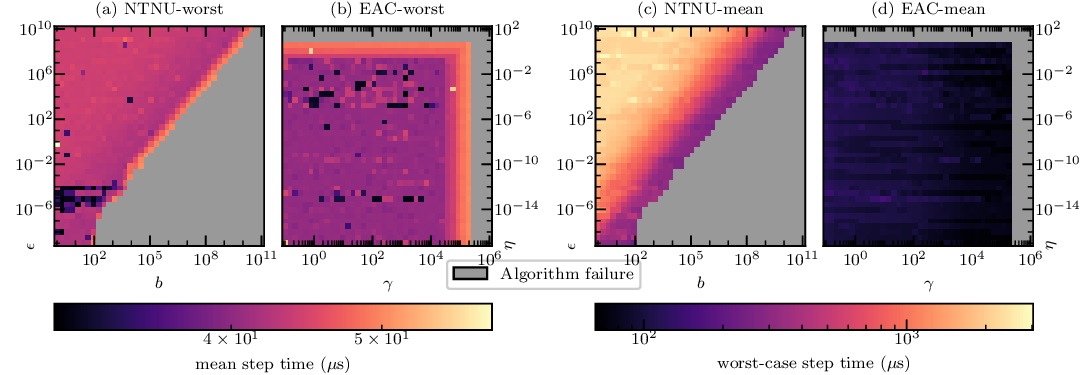}
    \caption{Worst time to converge for (a) the NTNU method and (b) for the proposed method, alongside the mean times for (c) the NTNU method and (d) for the proposed method.}\label{fig:timing}
\end{figure*}

\subsection{Example inside WFW}\label{ssec:feasible}

In the following, we fix the tuning parameters to $\gamma=10^{4}$ and $\eta=10^{-5}$ for the proposed method, while choosing $b=10^{6}$ and $\epsilon=10^{3}$ for the NTNU method. These values are selected to minimize the wrench error while being far from the non-convergence area. 

Instead of the base trajectory used in the previous study, we now consider a circular path, with a radius $r=0.1$~m, centered at $C=[0,\; 0,\; -0.05]^\top$~m, with a total execution time $T=1.4$~s, planned using a quintic polynomial. 

A comparison of the two methods is reported in Fig.~\ref{fig:feasible}. Notably, Fig.~\ref{fig:feasible}a-b show that both trajectories approach the tension limits smoothly. Interestingly, Fig.~\ref{fig:feasible}c shows that, unlike the NTNU method, the proposed method does not introduce errors in the wrench apart from numerical noise.
Lastly, Fig.~\ref{fig:feasible}d shows the $99^\textrm{th}$ percentile of the execution time over $5000$ runs. Such a metric has been chosen to reduce the impact of non-deterministic OS scheduling effects while preserving representative worst-case behavior. Given that the control law is deterministic and evaluated at identical states, the observed variability is ascribed to exogenous system-level factors rather than to the algorithm itself.

\begin{figure*}[t]
    \centering
    \includegraphics[width=\linewidth]{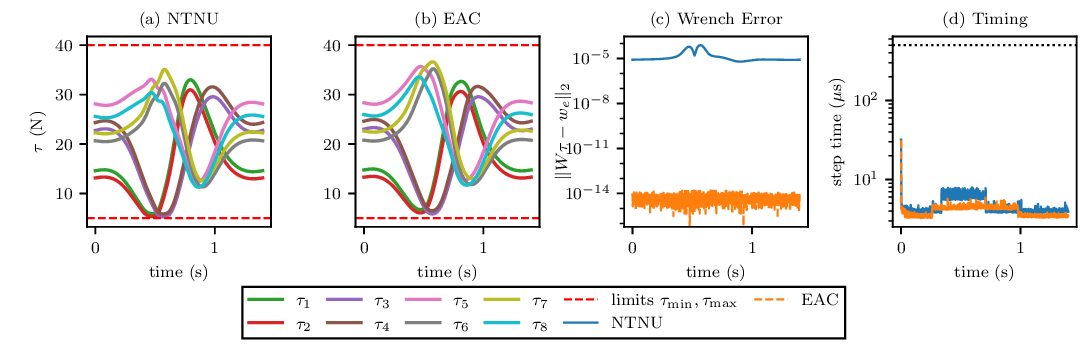}
    \caption{Feasible trajectory: (a) depicts the tension profiles for the NTNU method while (b) the ones for the proposed method. In (c) wrench errors are compared alongside the $99^\textrm{th}$ percentile of time to converge in (d). For convenience, the $500~\mu$s timing level is highlighted with a black dotted line.}\label{fig:feasible}
\end{figure*}

The worst time to converge, in the $99^\textrm{th}$ percentile sense, was $20.46~\mu$s for the NTNU method, while the proposed method required $18.71~\mu$s, thus resulting in a $9\%$ reduction.

\subsection{Example outside WFW}\label{ssec:unfeasible}
An additional test has been conducted to assess the performance of the proposed method, by pushing it outside the WFW. The same setting of Section~\ref{ssec:feasible} is adopted, with the only modification being an increased radius of the trajectory, namely $r=0.12$~m. The increased accelerations and the modified configuration lead to unfeasible conditions. The comparison of the two methods is reported in Fig.~\ref{fig:unfeasible}. Notably, both methods approach the tension limits smoothly, as shown in Fig.~\ref{fig:unfeasible}a-b. The wrench error is depicted in Fig.~\ref{fig:unfeasible}c, where it can be noted that for unfeasible points it is comparable, while for feasible points the EAC outperforms the NTNU. Moreover, the computational time is significantly lower and never exceeds $500~\mu$s. Accordingly, the worst time to converge, in the $99^\textrm{th}$ percentile sense, was $511~\mu$s for the NTNU method, while the proposed method required $244~\mu$s, thus resulting in a $52\%$ reduction.

\begin{figure*}[t]
    \centering
    \includegraphics[width=\linewidth]{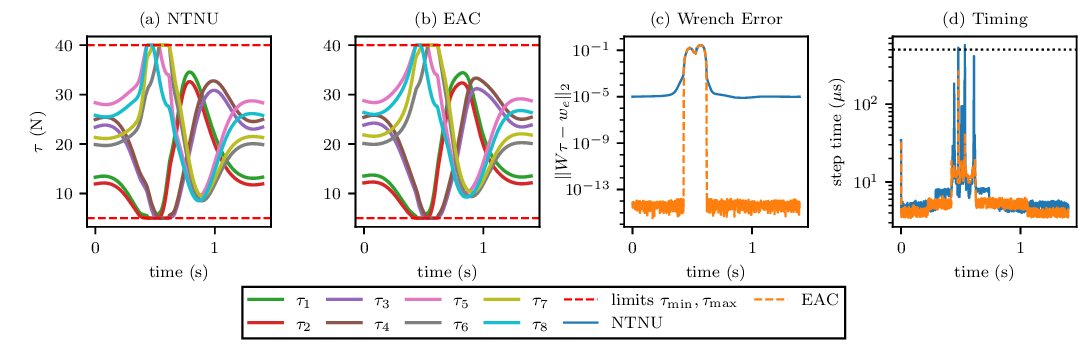}
    \caption{Unfeasible trajectory: (a) depicts the tension profiles for the NTNU method while (b) the ones for the proposed method. In (c) wrench errors are compared alongside the $99^\textrm{th}$ percentile of time to converge in (d). For convenience, the $500~\mu$s timing level is highlighted with a black dotted line.} \label{fig:unfeasible}
\end{figure*}

\begin{figure*}[t]
    \centering
    \includegraphics[width=\linewidth]{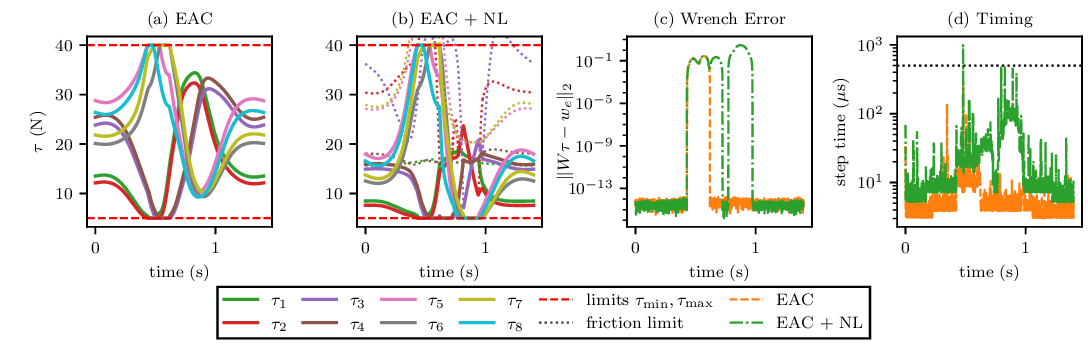}
    \caption{Unfeasible trajectory+NL constraints: (a) depicts the tension profiles without NL constraints (b) the ones with the the NL constraints alongside the NL limits in dotted line. In (c) wrench errors are compared alongside the $99^\textrm{th}$ percentile of time to converge in (d). For convenience, the $500~\mu$s timing level is highlighted with a black dotted line.} \label{fig:unfeasibleNL}
\end{figure*}

\subsection{Inclusion of non-linear constraints}\label{ssec:nl}
One of the advantages of the log-barrier formulation is the inclusion of non-linear inequality constraints. This feature is not available in ICF-based or QP-based methods.
Therefore, a test is conducted to show the EAC capabilities, assuming the cables are attached to four different structures (see, for example, Fig.~\ref{fig:cdprscheme}), which are secured by means of static friction.
Specifically, in the $k$-th structure, the equilibrium in the $z$-direction is
\begin{equation}
    n_k-m_s g - \sum_i^{\mathcal{I}_k} u_{z,i} \tau_i = 0,
\end{equation}
where $n_k$ is the ground reaction force, $m_s$ is the mass of the structure (in this example $m_s=4$~kg), and $\mathcal{I}_k$ are the indices of the cables attached for the $k$-th structure. The equilibrium in the $x$ and $y$ directions reads
\begin{equation}
    \begin{gathered}
        f_{x,k} - \sum_i^{\mathcal{I}_k} u_{x,i} \tau_i = 0 \quad \text{and} \quad f_{y,k} - \sum_i^{\mathcal{I}_k} u_{y,i} \tau_i = 0.
    \end{gathered}    
\end{equation}
Thus, the no-slip condition reads as
\begin{equation}\label{eq:NLcons}
    h_k(\ten) = f_c^2 n_k^2 - f_{x,k}^2 - f_{y,k}^2 > 0.
\end{equation}
where $f_c$ is the dry friction coefficient, in this work set equal to $0.5$.
The conditions~\eqref{eq:NLcons} are included in the objective of both~\eqref{eq:def} and\eqref{eq:def2} either as $\beta_\delta(h_k(\ten))$ for the former or $-\log(h_k(\ten))$.

The results reported in Fig.~\ref{fig:unfeasibleNL} show the influence of the friction constraints on cable tensions. In particular, their presence leads to lower tension values along the trajectory, thus avoiding critical crashes of the CDPR. Notice that in the middle of the simulations, the slack variable is active. This provides evidence of the EAC' abilities to manage both non-linear constraints while going outside the WFW.

\section{Conclusion}\label{sec:conclusion}
In this paper, the problem of generating smooth tension profiles for CDPRs, while guaranteeing zero wrench error inside the WFW has been addressed.

First, the limitations of existing approaches have been discussed. In particular, QP-based methods inherently introduce non-differentiable points and are not naturally suited to handle nonlinear constraints, while heuristic approaches generally do not guarantee continuity and may lead to significant wrench errors. Moreover, the existing NTNU barrier-function-based formulations have been shown to exhibit either reduced accuracy inside the WFW or slower convergence.

To overcome these limitations, a formulation capable of combining these features has been proposed. The method is based on a two-step approach: a relaxed barrier-function formulation is first employed as the nominal solution, while a slack-based formulation is activated whenever the solution exits a reduced tension box.

Future work will focus on the experimental validation of the proposed approach in real-world control scenarios.

%%%%%%%%%%%%%  BIBLIOGRAPHY  %%%%%%%%%%%%%%%%%%%%%%%%%%%%%%%%%%%%%%%%%
\nocite{*}
\bibliographystyle{ieeetr}   %% .bst file that follows ASME journal format. Do not change.
\bibliography{bibliography}

\end{document}